\newtheorem{theorem}{Theorem}
\newtheorem{lemma}[theorem]{Lemma}
\newtheorem{proposition}[theorem]{Proposition}
\newtheorem{definition}[theorem]{Definition}
\renewcommand{\geq}{\geqslant}
\def\qed{ \hfill $\blacksquare$}  
\begin{document}
	\def\spacingset#1{\renewcommand{\baselinestretch}%
	{#1}\small\normalsize} \spacingset{1}
	\title{Topic Supervised Non-negative Matrix Factorization}
	
	\author{Kelsey MacMillan\footnote{Master of Analytics Program, 
 University of San Francisco. San Francisco, CA 94117. \url{kjmacmillan@usfca.edu}} $\ $ and James D. Wilson\footnote{ \noindent Department of Mathematics and Statistics, University of San Francisco. San Francisco, CA 94117 \url{jdwilson4@usfca.edu}} }
		
	  \maketitle

\begin{abstract}
	{\footnotesize Topic models have been extensively used to organize and interpret the contents of large, unstructured corpora of text documents. Although topic models often perform well on traditional training vs. test set evaluations, it is often the case that the results of a topic model do not align with human interpretation. This interpretability fallacy is largely due to the unsupervised nature of topic models, which prohibits any user guidance on the results of a model. In this paper, we introduce a semi-supervised method called topic supervised non-negative matrix factorization (TS-NMF) that enables the user to provide labeled example documents to promote the discovery of more meaningful semantic structure of a corpus. In this way, the results of TS-NMF better match the intuition and desired labeling of the user. The core of TS-NMF relies on solving a non-convex optimization problem for which we derive an iterative algorithm that is shown to be monotonic and convergent to a local optimum. We demonstrate the practical utility of TS-NMF on the Reuters and PubMed corpora, and find that TS-NMF is especially useful for conceptual or broad topics, where topic key terms are not well understood. Although finding an optimal latent structure for the data is not a primary objective of the proposed approach, we find that TS-NMF achieves higher weighted Jaccard similarity scores than the contemporary methods, (unsupervised) NMF and latent Dirichlet allocation, at supervision rates as low as 10\% to 20\%. }
\end{abstract}

\noindent%
{\footnotesize \it Keywords:} {\footnotesize topic modeling, matrix decomposition, natural language processing, semi-supervised learning} 
\newpage
\spacingset{1.5} 

\section{Introduction}



Large, unstructured corpora of text data are generated from an expanding array of sources: online journal publications, news articles, blog posts, Twitter feeds, and Facebook mentions, to name a few. As a result, the need for computational methods to organize and interpret the structure of such corpora has become increasingly apparent. {Topic models} are a popular family of methods used to discover underlying semantic structure of a corpus of documents by identifying and quantifying the importance of representative topics, or themes, throughout the documents. Topic modeling techniques like non-negative matrix factorization (NMF) \cite{xu2003document} and latent Dirichlet allocation (LDA) \cite{blei2012probabilistic, blei2002latent, blei2003latent}, for example, have been widely adopted over the past two decades and have witnessed great success. 

Despite the accomplishments of topic models over the years, these techniques still face a major challenge: \emph{human interpretation}. To put this into context, consider the most common practice in which a user assesses a topic model. First, a test collection of documents with manually labeled topics is held out. A topic model is then trained on the remaining documents, and evaluated on how closely the topics discovered on the training data match those in the test set. As pointed out by \cite{blei2012probabilistic, tealeaves}, evaluation of topic models in this traditional training-test set manner often lead to results that are weakly correlated with human judgment. This is not surprising since the assessment strategy itself generally does not incentivize human interpretation. To provide a concrete example consider the well-studied Reuters corpus from \cite{nltk} that contains 10,788 labeled documents. The following is a document in the  corpus that has been labeled with the topic \emph{sugar}:

\begin{itemize}
\item[]{\it The outcome of today's European Community (EC) white sugar tender is extremely difficult to predict after last week's substantial award of 102,350 tonnes at the highest ever rebate of 46.864 European currency units (Ecus) per 100 kilos, traders said.} \end{itemize}

We ran NMF and LDA on the full corpus. The two closest matching topics to the true label \emph{sugar} were: NMF - \emph{tonne, export, shipment} and \emph{sugar, trader, european}; LDA - \emph{european, french, tonne}, and \emph{sugar, trader, said}. Each of these methods do indeed output a topic containing the true label \emph{sugar}; however, without further analysis it is a challenging task to identify that \emph{sugar} was the desired label of the document. We further analyze the Reuters corpus in Sections 4 and 5. 


In this paper we introduce a novel topic model, known as Topic Supervised NMF (TS-NMF), that dramatically improves the interpretability of contemporary topic models. TS-NMF is a semi-supervised topic model that enables the user to (i) provide examples of documents labeled with known topics and (ii) constrain the topic representation of the corpus to align with the labeled examples. Supervision is formulated as requiring that certain documents either contain a subset of known topics with non-zero strength or explicitly do not contain the identified known topics. By providing known topics to the model rather than allowing the model to generate an arbitrary latent structure, the interpretation of documents containing those known topics will be readily understood. TS-NMF relies on the minimization of a non-convex optimization function. We describe a fast iterative algorithm to approximate this minimization, and prove that the algorithm converges to a local minimum. We apply TS-NMF to two data sets, the Reuters news article corpus \cite{nltk} and the MEDLINE/PubMed abstracts corpus \cite{downloadpubmed}, and assess the utility of our method through a comparison with contemporary topic models, including unsupervised NMF and LDA.

\section{Related Work}
A corpus of $n$ documents and $t$ terms can be represented by a $n \times t$ term-document matrix $V$ containing non-negative entries $v_{i,j} \geq 0$ that quantify the importance of term $j$ in document $i$. The choice of weights $v_{i,j}$ is dependent upon the application, but is typically calculated using some function of term frequency (TF) and inverse document frequency (IDF) (see \cite{salton1988term} for a review). Mathematically, topic models are mappings of $V$ to a lower dimensional representation of $V$ involving the $d < n, t$ {topics} describing the documents. Existing topic modeling approaches generally fall into two classes of methods: {matrix decomposition} methods, which seek a low dimensional representation of $V$ through a factorization into two or more low-rank matrices, and {probabilistic topic modeling} methods, which seek a generative statistical model for $V$. Here, we describe each class of methods in more detail, paying special attention to the works that are most closely related to TS-NMF.


TS-NMF can be viewed as a semi-supervised generalization of NMF. In the case that no supervision is provided, TS-NMF indeed reduces to NMF. NMF is a powerful matrix decomposition technique that has been applied to a variety of areas, including speech denoising, microarray analysis, collaborative filtering, and computer vision \cite{brunet2004metagenes, lee1999learning, shashua2005non, smaragdis2003non}.  More broadly, NMF is useful in settings where the domain of the data is inherently non-negative and where parts-based decompositions are desired. In general, NMF seeks a $n \times d$ non-negative matrix $W$ and a $d \times t$ non-negative matrix $H$ so that $V \approx WH$. The matrices $W$ and $H$ are estimated by minimizing the following objective function:

\begin{equation}\label{eq:NMF_dist} D_{NMF}(W,H) = ||V - WH||_F^2, \quad W \geq 0 \quad H \geq 0, \end{equation}

\noindent where $||\cdot||_F$ is the Frobenius norm. In topic modeling, $W$ and $H$ have a special interpretation: $W_{ij}$ quantifies the relevance of topic $j$ in document $i$, and $H_{ij}$ quantifies the relevance of term $j$ in topic $i$ (see Figure \ref{nmf_toy_figure}). 

\begin{figure}[H]
\centering
  \caption{Illustration of NMF model for topic modeling}
  \includegraphics[width=0.85\linewidth]{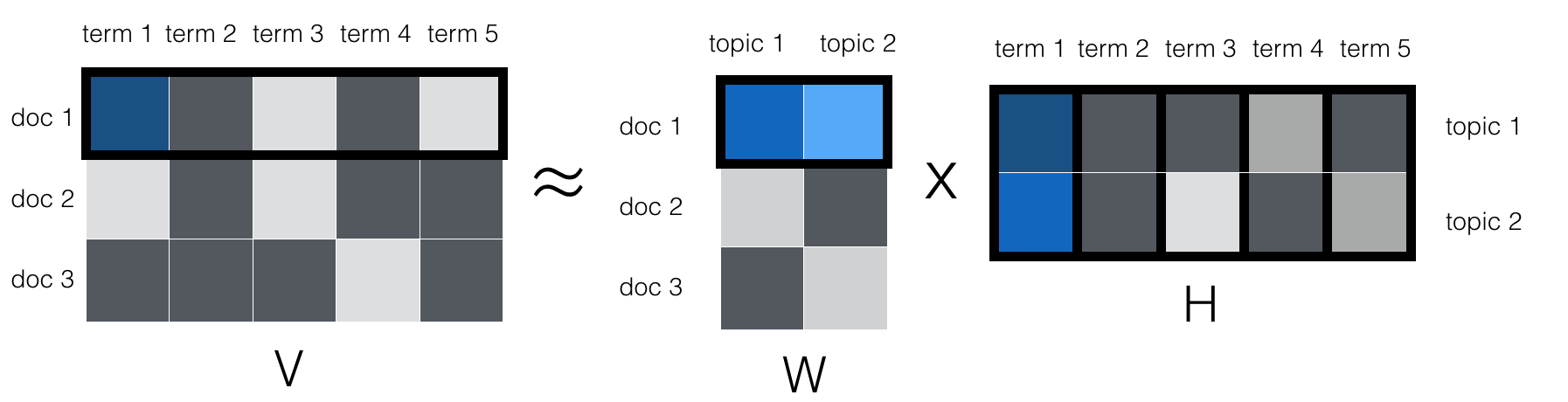}
  \label{nmf_toy_figure}
\end{figure}

Other related extensions of NMF include constrained NMF \cite{constrainednmf}, and semi-supervised NMF \cite{Chen2008}. Constrained NMF assumes that some subset of columns of $V$ have class labels that dictate their location in latent space. In this formulation, one constrains the matrix $H$ so as to enforce the relation that if the $i$th and $j$th columns of $V$ have the same class labels, then the $i$th and $j$th rows of $H$ are equal. Semi-supervised NMF was developed with the objective of identifying the clusters of each column of $V$. With this method, the user can provide pairwise constraints on the columns of $V$, specifying whether they must or cannot be clustered together. The minimization problem in (\ref{eq:NMF_dist}) is reformulated as a non-negative tri-factorization of the similarity matrix that provides the clustering information. As we will see in the next section, each of these methods permits much stronger supervision than TS-NMF in that our proposed method constrains only which of a subset of topics are not allowed to be in some subset of documents.

Latent Semantic Indexing (LSI) \cite{deerwester1990indexing, hofmann1999probabilistic} is a popular matrix decomposition method utilized in topic modeling. LSI seeks a latent representation of $V$ via a singular value decomposition (SVD).  Though not directly applied to topic modeling, \cite{bair2006prediction} and \cite{li2016supervised} introduced supervised versions of the decomposition methods Principal Components Analysis (PCA) and SVD, respectively. In these works, supervision consisted of incorporating auxiliary information for matrix decomposition in the form of a linear model. 

Though not our focus in this paper, probabilistic topic models have been widely applied (see \cite{blei2012probabilistic} for a review). Probabilistic topic models seek a generative statistical model for the matrix $V$. The most prominent of these approaches is latent Dirichlet allocation (LDA) \cite{blei2002latent,blei2003latent}, which models the generation of $V$ as a posterior distribution arising from the probability distributions describing the occurrence of documents over topics as well as the occurrence of topics over terms. LDA takes a Bayesian approach to topic modeling and assumes a Dirichlet prior for the sets of weights describing topics over terms and documents over topics. In this way, the resulting model forms a probability distribution, rather than unconstrained weights on $W$ and $H$. Recent extensions of LDA have considered alternative prior specifications and have begun to explore supervision through apriori topic knowledge \cite{andrzejewski2009latent}.






\section{Topic Supervised Non-negative Matrix Factorization}
Suppose that one supervises $k < < n$ documents and identifies $\ell < < t$ topics that were contained in a subset of the documents. One can supervise the NMF method using this information. This supervision can be represented by the $n \times d$ topic \emph{supervision matrix} $L$. The elements of $L$ are of the following form:

$$L_{ij} = \begin{cases} 1& \text{if topic $j$ is permitted in document $i$} \\ 0 & \text{if topic $j$ is \emph{not} permitted in document $i$} \end{cases}$$

We use the supervision matrix $L$ to constrain the importance weights $W_{ij}$. For all pairs $(i,j)$ such that $L_{ij} = 0$, we enforce that $W_{ij}$ must also be 0. One can view this constraint as requiring the labeled documents to lie within a prescribed subspace of the latent topic space. Let $\circ$ represent the Hadamard product operator. For a term-document matrix $V$ and supervision matrix $L$, TS-NMF seeks matrices $W$ and $H$ that minimize
  
\begin{equation} \label{eq:TS_Distance}
D_{TS}(W,H) = {\lvert \lvert V- (W \circ L) H \rvert \rvert}_{F}^{2}, 
								\quad W \geq 0, \quad H \geq 0.
\end{equation}






\subsection{The TS-NMF Algorithm}

In general the optimization function $D_{TS}$ is not convex in its arguments. Thus we seek an iterative monotonic algorithm that improves in each iteration. We first devise an iterative multiplicative update rule for minimizing (\ref{eq:TS_Distance}), then we prove that our update rules are monotonically non-increasing in $D_{TS}$ and will converge to a local minimum fixed point.

We begin with an equivalent representation of the Frobenius norm of a matrix: $||A||_F^2 = \text{Tr}(A^TA)$, where $\text{Tr}(\cdot)$ represents the trace of a matrix, or the sum of its entries on the diagonal. Using this representation and properties of the trace we are able to simplify the expression in (\ref{eq:TS_Distance}) as follows.

\begin{align}\label{eq:trace}
D_{TS}(W, H) & = \text{Tr}(VV^T) - 2\text{Tr}(V^T(W \circ L)H) + \text{Tr}(H^T(W \circ L)^T(W \circ L) H). 
\end{align}


We apply the method of Lagrange multipliers on (\ref{eq:trace}) with the constraints $W_{ij} \geq 0$ and $H_{ij} \geq 0$. Let $\bm{\alpha} = (\alpha_{ij})$ and  $\bm{\beta} = (\beta_{ij})$ denote the matrices of Lagrange multipliers for W and H, respectively. Then, minimizing $D_{TS}(W, H)$ is equivalent to minimizing the Lagrangian $\mathcal{L}(W, H)$: 

\begin{equation} \label{eq:lagrange} \mathcal{L}(W, H) = D_{TS}(W, H) + \text{Tr}(\bm{\alpha} W^T) + \text{Tr}(\bm{\beta} H^T)\end{equation}

Taking partial derivatives in (\ref{eq:lagrange}) yields

\begin{equation}\label{eq:deriv1}
{\partial \mathcal{L}(W,H)}/{\partial W} = -2(VH^T \circ L) + 2((W \circ L) HH^T) \circ L + \alpha.
\end{equation}

\begin{equation}\label{eq:deriv2}
{\partial \mathcal{L}(W,H)}/{\partial H} = -2V^T(W \circ L) + 2H(W \circ L)^T (W\circ L) + \bm{\beta}.
\end{equation}

Note that minimizing $D_{TS}(W,H)$ can be achieved in an element-wise fashion. With this in mind, we apply the Karush Kuhn Tucker conditions $\alpha_{ij}W_{ij} = 0$ and $\beta_{ij}H_{ij} = 0$ and set the expressions in (\ref{eq:deriv1}) and (\ref{eq:deriv2}) to 0 for each element (i,j) to obtain the following system of equations

\begin{equation}\label{eq:system1}[H (W\circ L)^T (W \circ L)]_{rj} H_{rj} - [V^T(W\circ L)]_{rj} H_{rj} = 0\end{equation}

\begin{equation}\label{eq:system2}[((W \circ L) H H^T) \circ L]_{ir} W_{ir} - [VH^T \circ L]_{ir} W_{ir} = 0 \end{equation}

Solving equations (\ref{eq:system1}) and (\ref{eq:system2}) leads to the following iterative algorithm that sequentially updates the entries of $H$ and $W$. 

\begin{framed}
\underline{\bf Algorithm: TS-NMF Multiplicative Update Rules}

\begin{itemize}
\item[] {\bf Initialize} $H^0$, $W^0$, $t = 0$ 
\item[] {\bf Loop}
\begin{itemize}
\item[] {\bf Set} $H = H^t$, $W = W^t$
\item[] {\bf For} $r = 1, \ldots, d$, $j = 1, \ldots, t$, and $i = 1, \ldots, n$: 
\item[] \begin{equation} \label{eq:3}
H_{rj}^{t+1} = H_{rj}^{t}\frac{[(W \circ L)^TV]_{rj}}{[(W \circ L)^T(W \circ L)H]_{rj}}
\end{equation}

\item[] \begin{equation} \label{eq:4}
W_{ir}^{t+1} = W_{ir}^{t}\frac{[(VH^T) \circ L]_{ir}}{[((W \circ L)HH^T) \circ L]_{ir}}
\end{equation}
\item[] {\bf If} $H^{t+1} = H^{t}$ and $W^{t+1} = W^{t}$, {\bf break}
\item[] {\bf Else} set $t = t+1$ and  repeat {\bf Loop}
\end{itemize}
\item[] {\bf Return} Fixed points $H^* = H^{t+1}$, $W^* = W^{t+1}$
\end{itemize}
\end{framed}

Initialization of $H^0$ and $W^0$ is arbitrary; however, we follow the suggestion of \cite{initialization} and initialize $H^0$ using the Random Acol method. The entries of $W^0$ are sampled as independent realizations from a uniform random variable on the unit interval.

\subsection{Monotonicity of the Algorithm}
We now analyze the convergence and monotonicity of the TS-NMF algorithm by analyzing the update rules (\ref{eq:3}) and (\ref{eq:4}). Our main result shows that the optimization function $D_{TS}(W,H)$ is non-increasing in these updates, and that a fixed point $(W^*, H^*)$ will be a stationary point of the function $D_{TS}(W,H)$. This suggests that the output of the algorithm, $(W^*, H^*)$ will be a local minimum of $D_{TS}(W, H)$. Our main result is made precise in the following theorem.

\begin{theorem} \label{thm:optimality}
The optimization function $D_{TS}(W,H)$ is non-increasing under the update rules (\ref{eq:3}) and (\ref{eq:4}). Furthermore, $D_{TS}(W,H)$ is invariant under these updates if and only if $(W^{t}, H^{t})$ is a stationary point for $D_{TS}(W,H)$. It follows that $(W^*, H^*)$ is a local minimum for $D_{TS}(W,H)$. 
\end{theorem}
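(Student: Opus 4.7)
My plan is to adapt the auxiliary function / majorization--minimization framework of Lee and Seung for classical NMF to the TS-NMF objective. Since $D_{TS}$ is biconvex (quadratic and convex in $H$ with $W$ fixed, and likewise quadratic and convex in $W$ with $H$ fixed), I would analyze the two multiplicative updates as two alternating block-minimization steps. The unifying lemma is the following: if I can exhibit a function $G(x, x^t)$ with $G(x, x^t) \geq F(x)$ for all $x \geq 0$ and $G(x^t, x^t) = F(x^t)$, then the iterate $x^{t+1} = \arg\min_{x\geq 0} G(x, x^t)$ automatically satisfies $F(x^{t+1}) \leq G(x^{t+1}, x^t) \leq G(x^t, x^t) = F(x^t)$, which is the monotonicity claim.

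For the $H$-update at fixed $W$, I would treat each column of $H$ separately and take the quadratic auxiliary function
\[
G_H(h, h^t) \;=\; F(h^t) + (h - h^t)^T \nabla F(h^t) + \tfrac{1}{2}(h - h^t)^T K(h^t)(h - h^t),
\]
where $K(h^t)$ is the diagonal matrix with entries $[(W\circ L)^T (W\circ L) h^t]_r / h^t_r$. The majorization $K(h^t) \succeq (W\circ L)^T (W\circ L)$ is verified by the standard componentwise argument that uses non-negativity of $(W\circ L)^T (W\circ L)$, and the $G_H(h^t, h^t) = F(h^t)$ identity is immediate. Setting $\partial G_H / \partial h = 0$ and solving gives precisely the multiplicative rule~(\ref{eq:3}).

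The $W$-update is where the Hadamard mask $L$ genuinely enters, and this is the step I anticipate as the main obstacle. The subtlety is that the auxiliary function must majorize $D_{TS}$ as a function of $W$ itself, not of the masked matrix $W \circ L$. I would argue that because $L \in \{0,1\}^{n\times d}$, the coordinates with $L_{ir} = 0$ decouple from the objective entirely: $(W \circ L)_{ir}$ depends only on $L_{ir} = 1$ entries, so the problem reduces to a restricted problem on the support of $L$, to which the Lee--Seung-style quadratic majorizer applies directly. The diagonal majorizer for this block has entries $[((W^t \circ L) H H^T) \circ L]_{ir} / W^t_{ir}$, and minimizing the resulting auxiliary function recovers~(\ref{eq:4}); entries with $L_{ir} = 0$ are kept at zero by the numerator of the update, consistent with the masked constraint.

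Finally, invariance under both updates means the multiplicative factors in~(\ref{eq:3}) and~(\ref{eq:4}) equal $1$ wherever the corresponding entry of $H$ or $W$ is positive. Combined with the nonnegativity of all quantities, this is exactly the system of KKT equations~(\ref{eq:system1})--(\ref{eq:system2}), so a fixed point of the algorithm coincides with a stationary point of $D_{TS}$. Because the objective is bounded below by $0$, the monotone sequence $D_{TS}(W^t, H^t)$ converges, and since each block subproblem is convex with a unique minimum given by the update, any limit point $(W^*, H^*)$ is a block-coordinatewise minimum and hence, by standard arguments for nonnegative biconvex minimization, a local minimum of $D_{TS}$.
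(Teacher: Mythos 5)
Your proposal is correct and follows essentially the same route as the paper: the auxiliary-function (majorization--minimization) argument with the same diagonally rescaled quadratic majorizers, whose minimizers reproduce the multiplicative updates (\ref{eq:3}) and (\ref{eq:4}), followed by the identification of fixed points with the KKT system (\ref{eq:system1})--(\ref{eq:system2}). The differences are only presentational --- you work columnwise and explicitly note that entries with $L_{ir}=0$ decouple, whereas the paper argues elementwise --- so the two proofs are substantively the same.
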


The proof of Theorem \ref{thm:optimality} relies on identifying an appropriate auxiliary function for $D_{TS}(W,H)$ and proving that the update rules (\ref{eq:3}) and (\ref{eq:4}) minimize the chosen auxiliary function at each iteration. This proof technique was, for example, utilized in \cite{dempster1977maximum} for analyzing convergence properties of the Expectation-Maximization algorithm. We fill in the details below.

Let $F_H(H_{rj})$ denote the part of $D_{TS}(W,H)$ that depends on the element $H_{rj}$, and $F_W(W_{ir})$ the part that depends on $W_{ir}$. Since the update rules in (\ref{eq:3}) and (\ref{eq:4}) are element-wise, it is sufficient to show that $F_H(H_{rj})$ and $F_W(W_{ir})$ are non-increasing in the updates to prove Theorem \ref{thm:optimality}. To show this, we construct an appropriate auxiliary function for $F_H(H_{rj})$ and $F_W(W_{ir})$. An auxiliary function for $F(x)$ is defined as follows:

\begin{definition}
$G(x,x')$ is an auxiliary function for $F(x)$ if $G(x,x') \geq F(x)$ and $G(x,x) = F(x)$.
\end{definition}

The following well-known lemma reveals the importance of auxiliary functions in studying the convergence of iterative algorithms.

\begin{lemma}\label{lemma1}
If $G(x, x')$ is an auxiliary function for the function $F(x)$, then $F$ is non-increasing under the update
\begin{equation}\label{updates} x^{t+1} = \text{argmin}_x G(x, x')\end{equation}
\end{lemma}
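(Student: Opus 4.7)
The plan is to deduce the monotonicity $F(x^{t+1}) \leq F(x^t)$ from a short chain consisting of one inequality from the upper-bound property of $G$, one inequality from the definition of $x^{t+1}$ as a minimizer, and one equality from $G(x,x)=F(x)$. This is a classical three-line argument, so the proposal is brief.

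First, I would invoke the upper-bound property of an auxiliary function, $G(x, x') \geq F(x)$, at the point $x = x^{t+1}$ with $x' = x^t$, to obtain $F(x^{t+1}) \leq G(x^{t+1}, x^t)$. Next, since $x^{t+1} = \text{argmin}_x\, G(x, x^t)$ by the update rule (interpreting the $x'$ that appears in (\ref{updates}) as $x^t$), and since $x^t$ is itself a feasible argument, we get $G(x^{t+1}, x^t) \leq G(x^t, x^t)$. Finally, the equality property of an auxiliary function at the diagonal, $G(x, x) = F(x)$, evaluated at $x = x^t$, gives $G(x^t, x^t) = F(x^t)$. Chaining the three displays yields
\[
F(x^{t+1}) \leq G(x^{t+1}, x^t) \leq G(x^t, x^t) = F(x^t),
\]
which is exactly the claim that $F$ is non-increasing under the update.

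There is essentially no technical obstacle here; the only subtlety is notational (matching the ``$x'$'' of (\ref{updates}) with $x^t$), together with the tacit assumption that the $\text{argmin}$ is attained. The latter will hold automatically when this lemma is applied inside the proof of Theorem~\ref{thm:optimality}, since the auxiliary functions to be constructed for $F_H(H_{rj})$ and $F_W(W_{ir})$ will be quadratic upper bounds whose element-wise minima admit closed-form expressions, and those closed forms are precisely the multiplicative updates (\ref{eq:3}) and (\ref{eq:4}).
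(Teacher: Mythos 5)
Your proof is correct: the chain $F(x^{t+1}) \leq G(x^{t+1}, x^t) \leq G(x^t, x^t) = F(x^t)$ is exactly the standard argument for this lemma, which the paper states as ``well-known'' and does not prove explicitly. Your remark about the argmin being attained is also well taken, since in the application the auxiliary functions $G_H$ and $G_W$ are quadratics with non-negative leading coefficient, so the minimizers exist and coincide with the multiplicative updates (\ref{eq:3}) and (\ref{eq:4}).
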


An important consequence of Lemma \ref{lemma1} is that $F(x^{t+1}) = F(x^{t})$ if and only if $x^t$ is a local minimum of $G(x, x^t)$. It follows that iterating the update in (\ref{updates}) will converge to a local minimum of $F$. We proceed by identifying auxiliary functions for $F_H(H_{rj})$ and $F_W(W_{ir})$. The next proposition explicitly identifies the auxiliary functions that we need.

\begin{proposition}\label{prop:auxiliary_functions}
Let $F_W(W_{ir})$ denote the part of (\ref{eq:trace}) that depends on $W_{ir}$, and let $F_H(H_{rj})$ be the part of (\ref{eq:trace}) that depends on $H_{rj}$. Define the following two functions:
$$G_H(h, H_{rj}^t) = F_H(H_{rj}^t) + \dfrac{\partial F_H(H_{rj}^t)}{\partial H}(h - H_{rj}^t) + \dfrac{[(W\circ L)^T (W \circ L) H]_{rj}}{H_{rj}^t}(h - H_{rj}^t)^2$$
$$G_W(w, W_{ir}^t) = F_W(W_{ir}^t) + \dfrac{\partial F_H(W_{ir}^t)}{\partial W}(w - W_{ir}^t) + \dfrac{[((W \circ L) HH^T) \circ L]_{ir}}{W_{ir}^t}(w - W_{ir}^t)^2.$$
Then $G_H(h, H_{rj}^t)$ is auxiliary for $F_H(h)$ and $G_W(w, W_{ir}^t)$ is auxiliary for $F_W(w)$. 
\end{proposition}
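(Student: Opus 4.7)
My plan is to verify the two defining properties of an auxiliary function directly, by exploiting the fact that $F_H(\cdot)$ and $F_W(\cdot)$ are each \emph{quadratic} in their scalar argument (since $D_{TS}$ is quadratic in each entry of $H$ and each entry of $W$). The equality $G(x,x)=F(x)$ is essentially a free verification: substituting $h=H_{rj}^t$ into $G_H$ kills both the linear and quadratic correction terms, leaving $F_H(H_{rj}^t)$, and analogously for $G_W$. The substantive content is the inequality $G(x,x')\geq F(x)$, and this is where the Lee--Seung style majorization argument enters.

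For the inequality, I would first write down the exact Taylor expansion of $F_H$ around $H_{rj}^t$, which terminates because $F_H$ is a quadratic polynomial in a single variable:
\begin{equation*}
F_H(h)=F_H(H_{rj}^t)+\frac{\partial F_H(H_{rj}^t)}{\partial H}(h-H_{rj}^t)+\tfrac12 F_H''(H_{rj}^t)(h-H_{rj}^t)^2.
\end{equation*}
Using the trace representation in~(\ref{eq:trace}) and differentiating twice in $H_{rj}$, one obtains $\tfrac12 F_H''(H_{rj}^t)=[(W\circ L)^T(W\circ L)]_{rr}$. Thus proving $G_H\geq F_H$ reduces to the scalar inequality
\begin{equation*}
\frac{[(W\circ L)^T(W\circ L)H^t]_{rj}}{H_{rj}^t}\;\geq\;[(W\circ L)^T(W\circ L)]_{rr},
\end{equation*}
because both $G_H$ and the Taylor form share the same constant and linear terms. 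This coefficient bound is the crux of the argument and is also the step I expect to be the main (though mild) obstacle. It follows by simply expanding the numerator as $\sum_k [(W\circ L)^T(W\circ L)]_{rk}H_{kj}^t$ and dropping every term except $k=r$, which is legitimate since all factors are non-negative by the $W\geq 0$, $H\geq 0$ constraints and since $L$ is 0/1 (so $(W\circ L)^T(W\circ L)$ has non-negative entries).

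For $G_W$ I would run the same argument with the roles swapped. Differentiating $F_W$ twice gives $\tfrac12 F_W''(W_{ir}^t)=L_{ir}^2[HH^T]_{rr}=L_{ir}[HH^T]_{rr}$ (using $L_{ir}\in\{0,1\}$), so the required bound becomes
\begin{equation*}
\frac{[((W^t\circ L)HH^T)\circ L]_{ir}}{W_{ir}^t}\;\geq\;L_{ir}[HH^T]_{rr}.
\end{equation*}
Writing the left-hand numerator as $L_{ir}\sum_k W_{ik}^t L_{ik}[HH^T]_{kr}$ and retaining only the $k=r$ summand yields $L_{ir}\cdot W_{ir}^t L_{ir}[HH^T]_{rr}=L_{ir}W_{ir}^t[HH^T]_{rr}$, again using non-negativity and $L_{ir}^2=L_{ir}$; dividing by $W_{ir}^t$ gives the claim.

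Combining these two bounds with the matching Taylor expansions yields $G_H(h,H_{rj}^t)\geq F_H(h)$ and $G_W(w,W_{ir}^t)\geq F_W(w)$ for all admissible $h,w\geq 0$, together with equality at $h=H_{rj}^t$ and $w=W_{ir}^t$. A small edge case to dispatch is when $H_{rj}^t=0$ or $W_{ir}^t=0$: the multiplicative updates preserve zeros, so those coordinates are frozen and the auxiliary inequality holds trivially on the restricted coordinate (or by interpreting the coefficient as $+\infty$). This completes the verification of the proposition and, together with Lemma~\ref{lemma1}, sets up the monotonicity claim of Theorem~\ref{thm:optimality}.
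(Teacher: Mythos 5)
Your proposal is correct and follows essentially the same route as the paper's own (very terse) proof: verify $G(x,x)=F(x)$ directly, then compare $G$ with the exact second-order Taylor expansion of the quadratic function $F$, so that the inequality reduces to the curvature bounds $[(W\circ L)^T(W\circ L)H^t]_{rj}/H_{rj}^t \geq [(W\circ L)^T(W\circ L)]_{rr}$ and $[((W^t\circ L)HH^T)\circ L]_{ir}/W_{ir}^t \geq L_{ir}[HH^T]_{rr}$, which you justify by dropping the nonnegative off-diagonal summands. The coefficient computations and the non-negativity argument you supply are precisely the ``little algebra'' the paper leaves implicit, and your remark about the $H_{rj}^t=0$ or $W_{ir}^t=0$ edge case is a reasonable extra precaution the paper does not address.
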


\begin{proof} It is obvious that $G_H(h, h) = F_H(h)$ and $G_W(w,w) = F_W(w)$. To show that $G_H(H_{rj}^t, h) \geq F_H(H_{rj}^t)$ one can instead compare $G_H(H_{rj}^t, h)$ with the second order Taylor expansion of $F_H(H_{rj}^t)$. Through a little algebra, the result follows. The same argument holds true to show $G_W(W_{ir}^t, w) \geq F_W(W_{ir}^t)$. \end{proof}
Our final proposition relates the desired update rule in (\ref{updates}) to our proposed update rules in (\ref{eq:3}) and (\ref{eq:4}).

\begin{proposition}
Let $G_H(h, H_{rj}^t)$ and $G_W(w, W_{ir}^t)$ be defined as in Proposition (\ref{prop:auxiliary_functions}). Then, 

\begin{equation}\label{min1}
\text{argmin}_h G_H(h, H_{rj}^t) = H_{rj}^{t}\frac{[(W \circ L)^TV]_{rj}}{[(W \circ L)^T(W \circ L)H]_{rj}}
\end{equation}

\begin{equation}\label{min2}
\text{argmin}_w G_W(w, W_{ir}^t) = W_{ir}^{t}\frac{[(VH^T) \circ L]_{ir}}{[((W \circ L)HH^T) \circ L]_{ir}}
\end{equation}
\end{proposition}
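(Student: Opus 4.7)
The plan is to exploit the fact that $G_H(h, H_{rj}^t)$ and $G_W(w, W_{ir}^t)$, as written in Proposition~\ref{prop:auxiliary_functions}, are one-variable quadratics in $h$ and $w$ respectively, with non-negative leading coefficients. Each is therefore a convex parabola with a unique minimizer determined by the first-order condition. The whole proof reduces to setting $\partial G_H/\partial h = 0$ and $\partial G_W/\partial w = 0$, solving the resulting linear equation in closed form, and plugging in the entry-wise partial derivatives of $F_H$ and $F_W$, which can be read off directly from the trace representation in (\ref{eq:trace}).

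Carrying this out for $G_H$, differentiation gives
\begin{equation*}
\frac{\partial G_H}{\partial h} \;=\; \frac{\partial F_H(H_{rj}^t)}{\partial H} + 2\,\frac{[(W\circ L)^T(W\circ L)H]_{rj}}{H_{rj}^t}\bigl(h - H_{rj}^t\bigr),
\end{equation*}
and setting this to zero yields
\begin{equation*}
h \;=\; H_{rj}^t - \frac{H_{rj}^t}{2\,[(W\circ L)^T(W\circ L)H]_{rj}}\,\frac{\partial F_H(H_{rj}^t)}{\partial H}.
\end{equation*}
The key manipulation is then to substitute
$$\frac{\partial F_H(H_{rj})}{\partial H} \;=\; -2\,[(W\circ L)^T V]_{rj} + 2\,[(W\circ L)^T(W\circ L)H]_{rj},$$
which is obtained from (\ref{eq:trace}) using the standard matrix-calculus identities $\partial\,\mathrm{Tr}(V^T A H)/\partial H = A^T V$ and $\partial\,\mathrm{Tr}(H^T A^T A H)/\partial H = 2 A^T A H$ with $A = W\circ L$. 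After this substitution, the additive $+2[(W\circ L)^T(W\circ L)H]_{rj}$ cancels the trailing $-H_{rj}^t$, leaving exactly the ratio in (\ref{min1}).

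The argument for $G_W$ is identical in structure. Its first-order condition produces
$$w \;=\; W_{ir}^t - \frac{W_{ir}^t}{2\,[((W\circ L)HH^T)\circ L]_{ir}}\,\frac{\partial F_W(W_{ir}^t)}{\partial W},$$
and substituting $\partial F_W/\partial W = -2[(VH^T)\circ L]_{ir} + 2[((W\circ L)HH^T)\circ L]_{ir}$, which is precisely the non-Lagrange part of (\ref{eq:deriv1}), gives (\ref{min2}) after the analogous cancellation.

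There is no real conceptual obstacle here; the entire argument is one-dimensional calculus on a convex quadratic. The only subtlety worth flagging is the degenerate case in which $H_{rj}^t = 0$, $W_{ir}^t = 0$, or a denominator vanishes. In that situation the auxiliary function collapses to a linear or constant expression in the unconstrained direction, and the multiplicative update in (\ref{min1})--(\ref{min2}) simply preserves the zero entry, consistent with the minimum of the degenerate quadratic on the non-negative reals being attained at the boundary.
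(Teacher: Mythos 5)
Your proof is correct and follows essentially the same route as the paper: the paper's argument is simply that $G_H$ and $G_W$ are convex quadratics in $h$ and $w$, so the minimizers come from the first-order condition, which after substituting the gradients of $F_H$ and $F_W$ from the trace form (\ref{eq:trace}) yields exactly (\ref{min1}) and (\ref{min2}). You merely carry out explicitly the cancellation the paper leaves implicit (and your remark on the degenerate zero-entry case is a reasonable extra observation, not a divergence in method).
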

\begin{proof}
The function $G_H(h, H_{rj}^t)$ is quadratic in $h$ and has a non-negative second derivative. Furthermore, $G_W(w, W_{ir}^t)$ is quadratic in $w$ and has a non-negative second derivative. Equations (\ref{min1}) and (\ref{min2}) directly follow.
\end{proof}

{\bf Proof of Theorem 1}: 
Recall that minimizing $D_{TS}(W,H)$ can be done in an element-wise fashion, namely by minimizing $F_H(H_{rj})$ and $F_W(W_{ir})$ for all pairs $(r,j)$ and $(i,r)$. It thus suffices to show that $F_H(H_{rj})$ and $F_W(W_{ir})$ are non-increasing in the update rules (\ref{eq:3}) and (\ref{eq:4}), respectively. Proposition \ref{prop:auxiliary_functions} reveals that $G_H(h, H_{rj}^t)$ and $G_W(w, W_{ir}^t)$ are auxiliary functions for $F_H(h)$ and $F_W(w)$, respectively. Thus according to Lemma \ref{lemma1}, $D_{TS}(W,H)$ will be non-increasing under the following updates:
$$H_{rj}^{t+1} = \text{argmin}_h G_H(h, H_{rj}^t); \qquad W_{ir}^{t+1} = \text{argmin}_w G_W(w, W_{ir}^t)$$
Applying (\ref{min1}) and (\ref{min2}) to the above updates, we obtain our desired result. \qed 

\subsection{Error Weighting}
In some applications, it may be of interest to incorporate a weighting scheme that captures ``importances'' of the documents in $V$. To do so, one can introduce a $n \times t$ weighting matrix $E$ whose entries represent the importance weight of the respective entries in $V$. The aim of TS-NMF then becomes to identify matrices $W \geq 0$ and $H \geq 0$ that minimize $D_{TSW}(W,H) = {\lvert \lvert ((V- (W \circ L) \times H)) \circ E \rvert \rvert}_{F}^{2}$. Although error weighting isn't required in TS-NMF, we find empirically that it can improve results by penalizing poor representations of labeled documents more heavily than unlabeled documents. For the results that follow we run TS-NMF on $D_{TSW}(W,H)$ with $E$ having row $i$ set to 1 if document $i$ was not supervised, and to the inverse supervision rate if document $i$ was supervised. We further discuss the weighting procedure and how to adapt the update rules in (\ref{eq:3}) and (\ref{eq:4}) to $D_{TSW}$ in the supplemental material.


\section{Experiments}

To assess the efficacy of TS-NMF, we study two datasets: the Reuters data set provided as part of the NLTK Python module \cite{nltk}, and abstracts from NIH's MEDLINE/PubMed citation record \cite{downloadpubmed}. The 
Reuters data set contains 10,788 labeled news documents, and each document often had multiple labels. We removed any Reuters news documents that had fewer than 250 characters, which left 8294 documents. From the MEDLINE/PubMed data set, we use a subset of 15,320 abstracts as the corpus and apply PubMed controlled MeSH keywords as labels. Abstracts and keywords are filtered such that each keyword label is attributed to at least 100 abstracts and each abstract has at least one keyword label. In each corpus, tokens were created by splitting the text into 1-grams with NLTK's Treebank Word tokenizer and then lemmatized using NLTK's Wordnet lemmatizer. Documents were encoded into a document-term matrix using TFIDF with a vocabulary of 2000 and L2-normalization of the term frequencies within each document. We also removed stopwords and any tokens with fewer than 3 characters long. In the Reuters data set, the true number of labels is 90, and in the PubMed dataset it is 265, so we set $k=90$ and $k=265$, respectively. 

We ran TS-NMF on both the Reuters and PubMed data sets, and compared discovered topics with LDA and unsupervised NMF. For TS-NMF we applied a range of supervision according to the supervision rate, which is defined as the proportion of documents included in the labeled set. We provide more detailed summaries of the data in the supplement.

\subsection{Evaluation Metrics}
Topic models are evaluated by comparing the similarity between the topics found by a model and the ground-truth labels provided in the Reuters dataset and by the PubMed MeSH keywords. Let $W$ and $\widetilde{W}$ be the identified and true topic - document matrices, respectively. To calculate the similarity between $W$ and $\widetilde{W}$, one first calculates a similarity matrix that expresses the Jaccard distrance between the rows of each topic - document matrix, separately. Let $\mathbf{W}_j$ be the $j$th row of $W$. Then, the Jaccard match between the $i$th and $j$th row is defined as:
\begin{equation} \label{eq:jaccard}
J(\mathbf{W}_i,\mathbf{W}_j) = \frac{\sum_{k=1}^d min(W_{ik},W_{jk})}{\sum_{k=1}^d max(W_{ik},W_{jk})}.
\end{equation}
The Kuhn-Munkres matching algorithm \cite{hungarian} is subsequently applied to the similarity matrices for $W$ and $\widetilde{W}$ to identify the optimal one-to-one mapping from discovered topics to known labels in such a way that the sum of all similarities between the mapped topics and true labels are maximized. A topic is considered "resolved" when it has a similarity score greater than 0.1.

\section{Results}
To gain practical insight into the interpretability of the models, we first provide some examples of labeled Reuters articles in Table \ref{tab:examples}. Document 1 of Table \ref{tab:examples} shows a case where all topic models provide reasonable interpretations. They all correctly identify `yen' and `dollar' as key topic defining terms and the meaning of those terms is clear. In the case of Document 2 all models correctly identify `sugar' as a key term, but the unsupervised methods (LDA and NMF) contain some spurious terms, such as `export', `european', and `said' that muddle interpretation. The benefit of topic supervision becomes particularly evident in Document 3. Notably, the topics found via NMF and LDA don't have a clear human interpretation. TS-NMF captures the terms `production' and `industrial' even with only a 0.2 supervision rate. The same holds true for higher supervision rates. 

\begin{table}
\caption{Example Reuters results from NMF, LDA, and TS-NMF at 0.2, 0.5, and 0.8 supervision rate. Results reveal that TS-NMF identify interpretable topics that closely match the true topic labels.}
\label{tab:examples}
\resizebox{\textwidth}{!}{ 
    \begin{tabular}{|p{3.5cm}|p{3.5cm}|p{3.5cm}|p{3.5cm}|p{3.5cm}|}
    \hline
    \multicolumn{5}{|p{17.75cm}|}{{\bf Document 1}: \emph{The Bank of Japan bought a small amount of dollars shortly after the opening at around 145.30 yen, dealers said. The central bank intervened as a medium - sized trading house sold dollars, putting pressure on the U.S. Currency, they said.} {\bf Labels}: dollar; money-foreign exchange; yen}\\
    \hline
    {\bf TS-NMF (0.2)} & {\bf TS-NMF (0.5)} & {\bf TS-NMF (0.8)} & {\bf NMF} & {\bf LDA} \\
    \hline
    dollar, yen, dealer (0.34) & dollar, yen, dealer (0.41) & dollar, yen, dealer (0.33) & yen, tokyo (0.34); dollar, dealer, currency (0.41) & dollar, yen, said (0.86) \\
    \hline
    \multicolumn{5}{p{17.5cm}}{}\\
    \hline
    \multicolumn{5}{|p{19cm}|}{{\bf Document 2}: \emph{The outcome of today's European Community (EC) white sugar tender is extremely difficult to predict after last week's substantial award of 102,350 tonnes at the highest ever rebate of 46.864 European currency units (Ecus) per 100 kilos, traders said.} {\bf Labels}: sugar }\\
    \hline 
    {\bf TS-NMF (0.2)} & {\bf TS-NMF (0.5)} & {\bf TS-NMF (0.8)} & {\bf NMF} & {\bf LDA} \\ \hline
sugar, tonne, white (0.29) & barley, tonne, ecus (0.19); sugar, tonne, white (0.27) & barley, tonne, ecus (0.15); sugar, tonne, white (0.19) & tonne, export, shipment (0.20); sugar, trader, european (0.37) & european, french, tonne (0.30); said, market, analyst (0.30); sugar,trader,said (0.37) \\
\hline
\multicolumn{5}{p{17.8cm}}{}\\
\hline
\multicolumn{5}{|p{19cm}|}{{\bf Document 3}: \emph{China's industrial output rose 14.1 pct in the first quarter of 1987 against the same 1986 period, the People's Daily said. Its overseas edition said the growth rate, which compares with a target of seven pct for the whole of 1987 was "rather high" but the base in the first quarter of 1986 was on the low side. Industrial output grew 4.4 pct in the first quarter of 1986. It said China's industrial production this year has been
normal but product quality and efficiency need further improvement. It gave no further details.} \qquad \qquad \qquad {\bf Labels}: industrial production index}\\
\hline 
    {\bf TS-NMF (0.2)} & {\bf TS-NMF (0.5)} & {\bf TS-NMF (0.8)} & {\bf NMF} & {\bf LDA} \\ \hline
china, agency, news (0.18); january, industrial, production (0.22); quarter, first, fourth (0.17) & pct, industrial, production (0.35); china, chinese (0.22) & quarter, first, third (0.17); pct, industrial, january (0.30); china, chinese, daily (0.28) & china, production, output (0.35); quarter, first, result (0.27) & pct, growth, year (0.25); pct, january, february (0.30); said, market, analyst (0.18) \\
\hline
    \end{tabular}}

\end{table}

Figure \ref{score_plot} shows the impact of supervision rate on  weighted Jaccard similarity score and topic resolution with greater granularity. Initially, at supervision rates below 20\%, we see rapid improvements in terms of similarity. Then, across a midrange of supervision rates (about 20\% to  70\%) we see a leveling off. Finally, at high supervision rates (greater than 70\%), nearly all remaining topics are resolved. This suggests that there exists an optimal supervision rate regime that balances labeling costs with marginal model improvements.\\

\begin{figure}[ht]
\centering
  \caption{Reuters TS-NMF similarity measures by supervision rate.}
  \includegraphics[width=.825\linewidth]{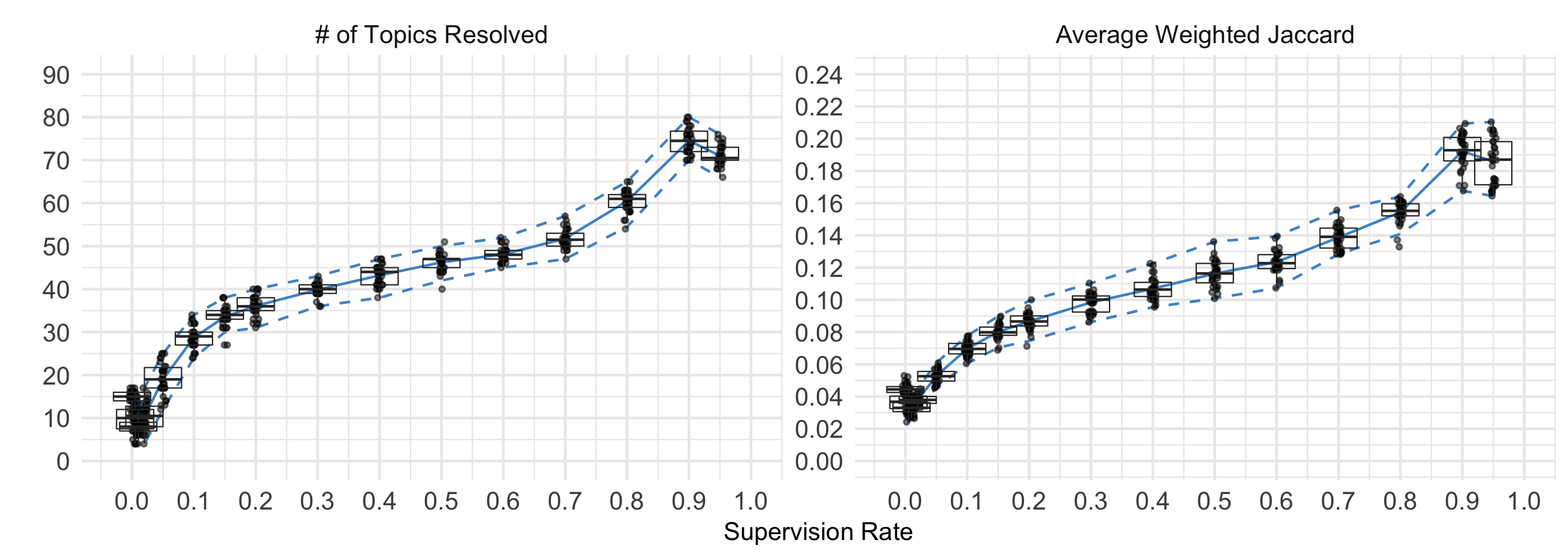} 
  \label{score_plot}
\end{figure}
\vskip -1pc
\begin{figure}[ht]
\centering
  \caption{Comparison of LDA, NMF, and TS-NMF (at 0.05, 0.2, 0.5, 0.8) on Reuters and PubMed.}
  \includegraphics[width=.85\linewidth]{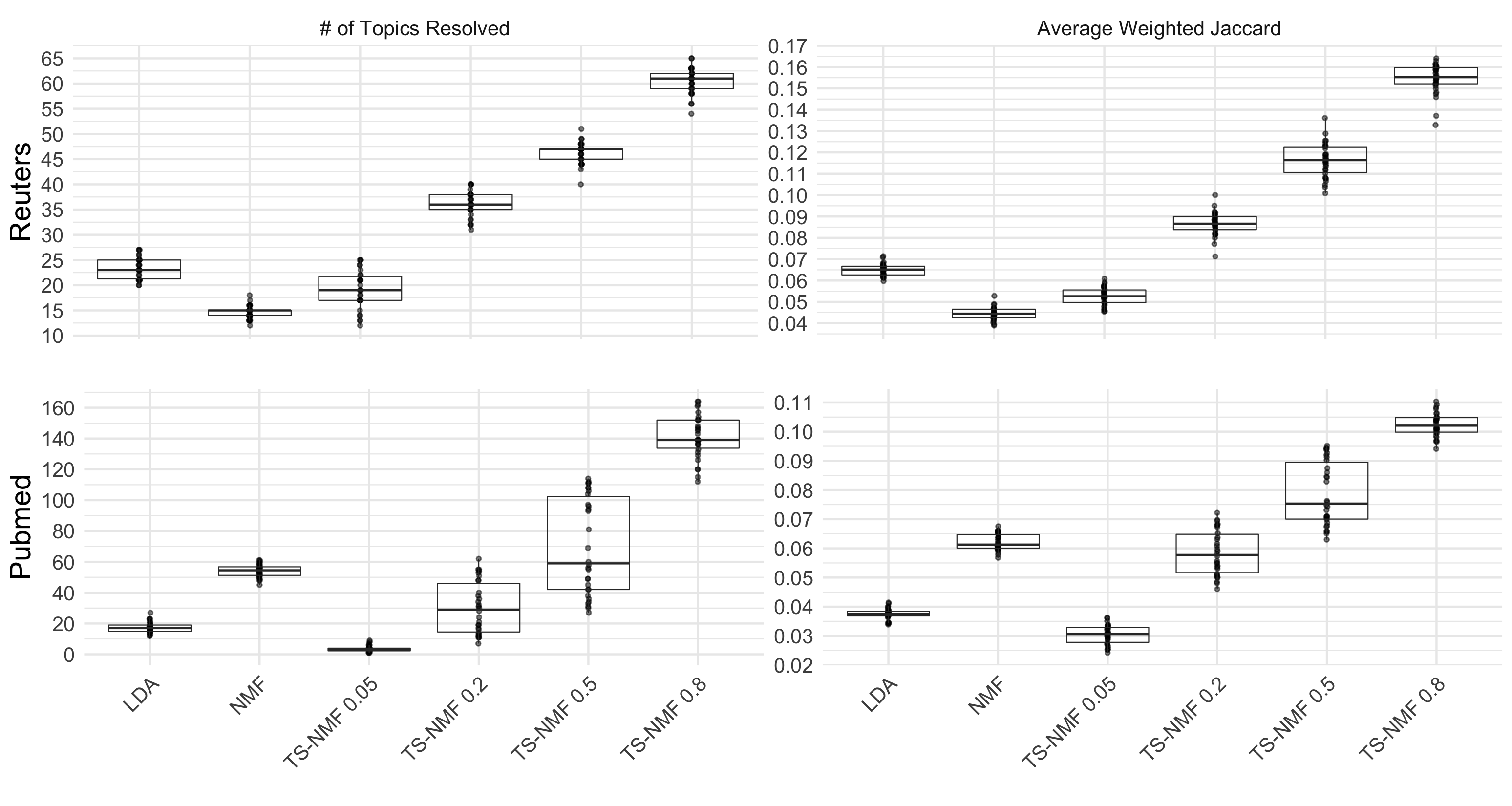}
  \label{compare_plot}
\end{figure}

Figure \ref{compare_plot} compares similarity scores across topic modeling methods for the two chosen corpora. In the case of the Reuters data set, LDA outperforms NMF and 5\% TS-NMF. However, we see significant gains at supervision levels of 20\%. In the case of the PubMed data set, however, LDA performs much worse than NMF. Even more interestingly, NMF outperforms TS-NMF up to a supervision level of about 50\%. Moreover, NMF's results are less variable than the TS-NMF results, implying that the subset of documents included in the labeled set are very important for this corpus. In fact, there appears to be bi-modality in the 50\% TS-NMF results. We note that although the weighted Jaccard similarity score is useful for quantifying how well the model represents the prescribed latent structure of the data set,  it does not say anything about interpretability of the results. Human labeling doesn't necessarily correspond to the best latent structure for the data, which is what the factorization and probabilistic models optimize for. This disagreement between human and machine is manifested by low Jaccard scores and is the crux of the interpretation challenges that our proposed supervision strategy seeks to address.

\section{Discussion}

In this paper we introduced a novel topic modeling method, TS-NMF, that improves human interpretability of topics discovered from large, poorly understood corpora of documents. In such applications, TS-NMF enables the user to incorporate supervision by providing examples of documents with desired topic structure. Although we focused on the formulation of TS-NMF for topic modeling, this method can, with appropriate choice of supervision, be readily generalized to any non-negative matrix decomposition application. We developed an iterative algorithm for TS-NMF based on multiplicative updates and proved the monotonicity of the algorithm and its convergence to a local optimum. Finally, we ran TS-NMF on the Reuters and PubMed corpora and compared the method to state of the art topic modeling methods. We found in both applications that there was a regime of low-to-moderate supervision rates that balanced cost (labeling is expensive) and utility (significant improvement in the model). We have shown that TS-NMF is an effective topic modeling strategy that should be considered in applications when human interpretability is important.



\section*{Appendix}
\appendix

\section{Error Weighting of TS-NMF}

We note that in practice very low supervision rates (few  labeled documents compared to the entire corpus) may result in a representation that "ignores" the labeled data in favor of a factorization with lower loss. This is counter to the interpretability benefit of TS-NMF. To provide a factorization that seeks minimum error around the labeled data, we can weight the ``important''  (minority case) examples more heavily in the loss function.

In this setting, error weighting can be done by introducing a new error weight matrix $E$, which has the same shape as $V$. $E$ has values at row indices of unlabeled documents  equal to $1$ (no weighting) and  value at labeled document indices greater to $1$. Typically, inverse frequency weighting is used, $(number\ of\ documents)/(number\ of\ labeled\ documents)$. However labeled document weights could also be assigned according to  a confidence metric. For instance, one may  have both user labeled documents and expert labeled documents, and  want to penalize the expert labeled documents more heavily.  With error weighting, the loss function for our problem becomes:

\begin{equation}
D_{TSW} = {\lvert \lvert ((V- (W \circ L) \times H)) \circ E \rvert \rvert}_{F}^{2}, 
\quad W \geq 0, \quad H \geq 0
\end{equation}

The corresponding update rules for error-weighted TS-NMF (to replace equations (9) and (10) in the main document) are as follows
\begin{equation}
H_{rj}^{t+1} = H_{rj}^{t}\frac{[(W \circ L)^T(V \circ E)]_{rj}}{[(W \circ L)^T((W \circ L)H \circ E)]_{rj}}
\end{equation}

\begin{equation}
W_{ir}^{t+1} = W_{ir}^{t}\frac{[((V \circ E)H^T) \circ L]_{ir}}{[((((W \circ L)H) \circ E)H^T) \circ L]_{ir}}
\end{equation}

\section{Additional Empirical Analyses}
We compare the weighted versus non-weighted versions of TS-NMF on the Reuters data set across a grid of supervision rates as well. Figure \ref{fig:weighted} illustrates the number of topics resolved and the Jaccard match under each of these methods. Generally, the results show that weighted TS-NMF outperforms unweighted TS-NMF across rates of supervision between 0.2 and 0.7, but that the two methods are comparable for other supervision rates.

\begin{figure}
	\caption{Comparison of weighted and non-weighted TS-NMF methods on the Reuters data set. \label{fig:weighted}}
	\centering
	\includegraphics[width = .8\textwidth]{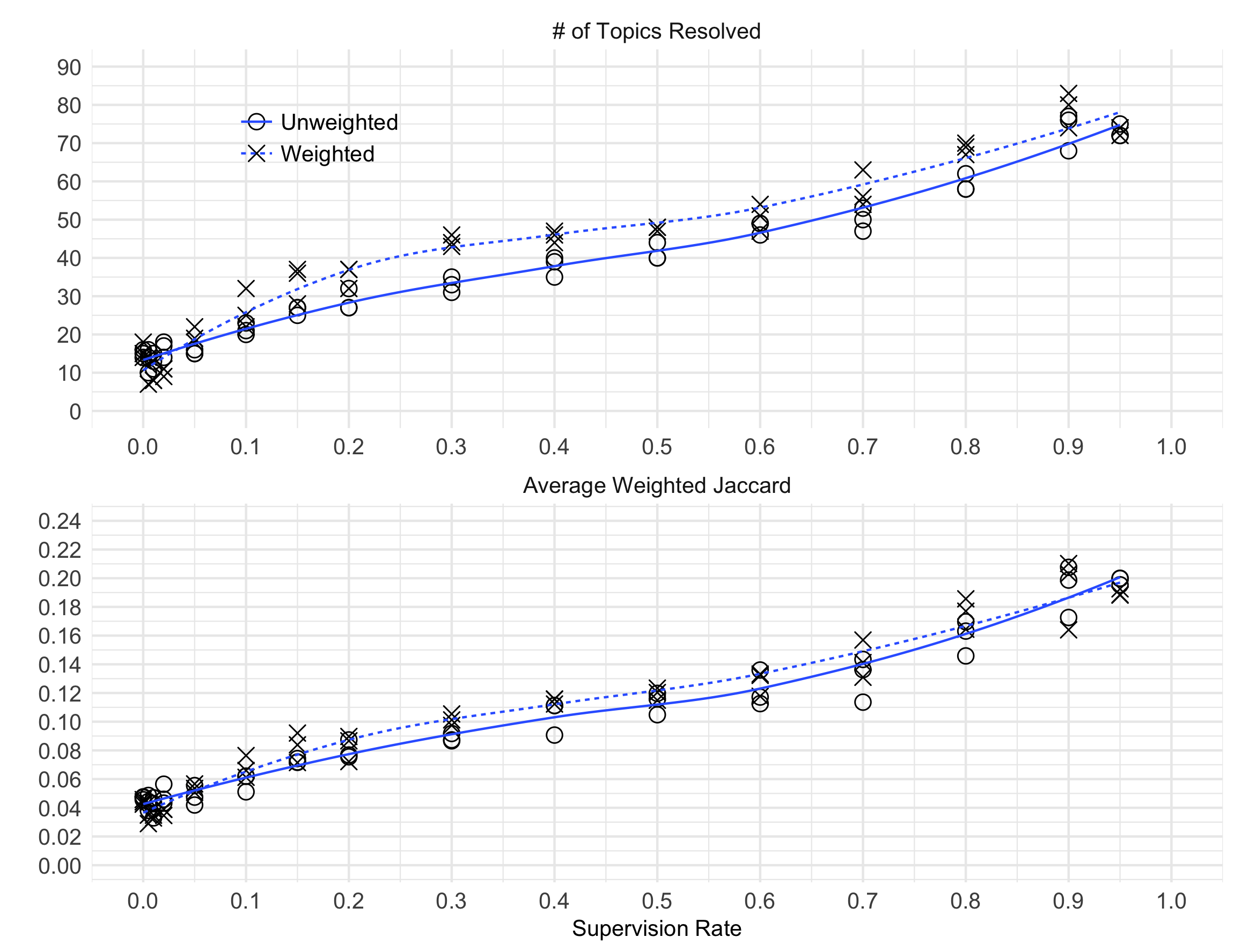}
\end{figure}

It may be of interest to investigate whether the supervision rate that we specify is conflated with topic coverage on either the Reuters or PubMed data set. To test this, we take 1000 random samples of documents at a specified supervision rate from the Reuters data set and calculate the topic coverage, or proportion of topics that are incorporated in the sample of documents.  We plot the relationship between topic coverage and supervision rate in the left plot of Figure \ref{coverage_plot}. This plot suggests that it is not immediately apparent whether the model is more responsive to the amount of topic coverage or the supervision rate. Moreover, because documents to topics is a one to many mapping, it is not clear how one would manipulate topic coverage while holding supervision rate steady. To investigate this dynamic, we took advantage of the fact that each of our runs at a given supervision rate produced a variety of topic coverage rates due the random selection of the labeled document subset. Therefore, we could examine whether a relatively large topic coverage rate at a given supervision rate was predictive of a higher score. The right plot of Figure \ref{coverage_plot} is a plot of topic coverage rate deviation from group mean against score deviation from group mean, where groups are defined by the supervision rate. Based on Figure \ref{coverage_plot}, there is no relationship between topic coverage and model quality at a given supervision rate. 

\begin{figure}
\centering
  \caption{(Left) Topic coverage by supervision rate. (Right) Topic coverage deviation by similarity score deviation within supervision rate groups.}
  \includegraphics[width=0.45\linewidth]{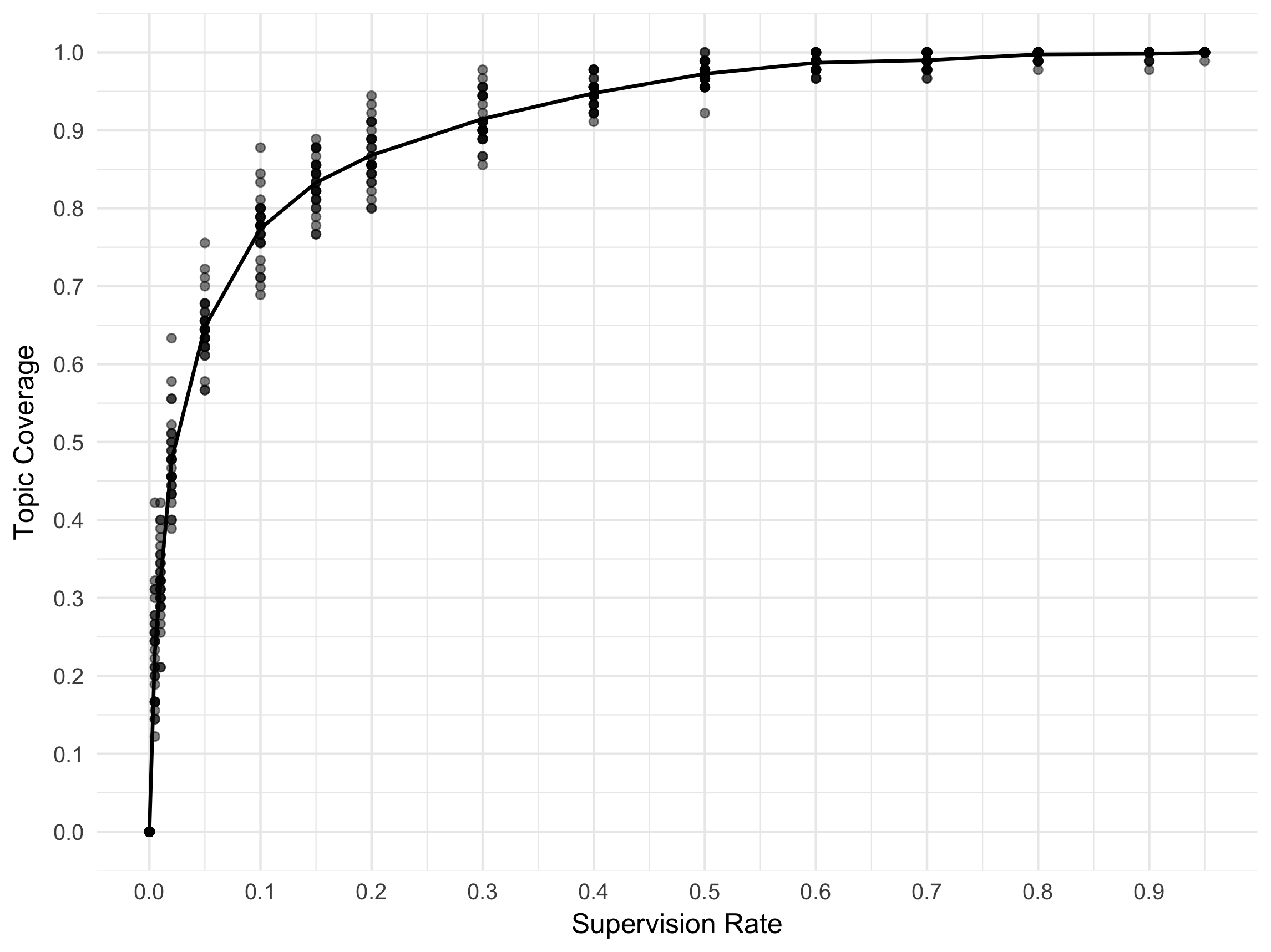} \includegraphics[width=0.45\linewidth]{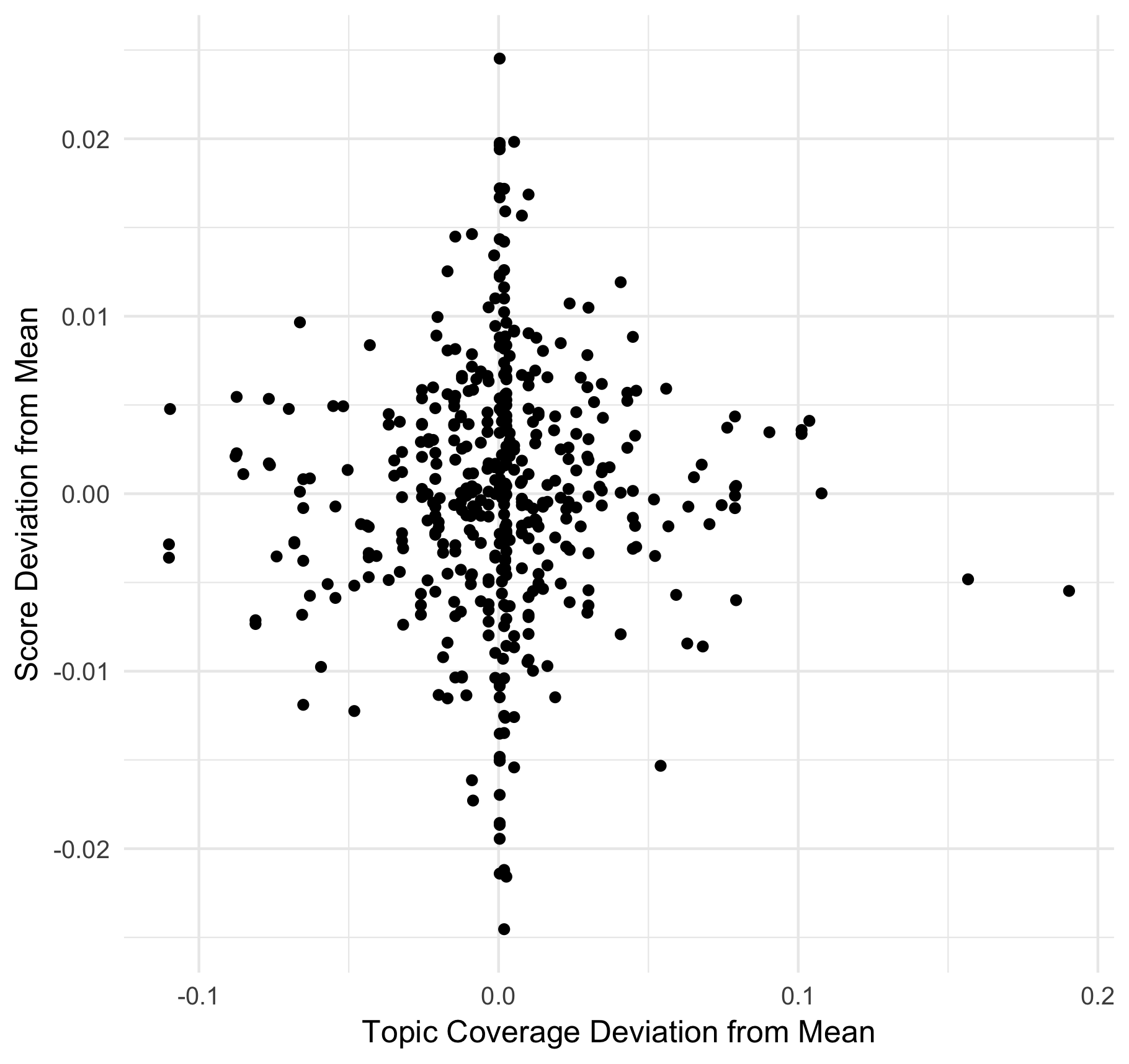}
  \label{coverage_plot}
\end{figure}


\newpage

\small
\bibliographystyle{plain}


\end{document}